%
\documentclass[runningheads]{llncs}
\usepackage[T1]{fontenc}
%
\usepackage{graphicx}
\usepackage{algpseudocode}
\usepackage{algorithm}
\usepackage{xcolor}
\usepackage{amsmath} 
\usepackage{amssymb}  
\usepackage{wrapfig}

%
%
\begin{document}
\title{Interaction-aware Conformal Prediction \\ for Crowd Navigation}
%
%
\author{Zhe Huang\inst{1}\orcidID{0000-0002-2933-5379} \and
Tianchen Ji\inst{1}\orcidID{0000-0001-6547-2787} \and
Heling Zhang\inst{1}\orcidID{0009-0003-4891-7934} \and Fatemeh Cheraghi Pouria\inst{1}\orcidID{0009-0008-9708-7574} \and Katherine Driggs-Campbell\inst{1}\orcidID{0000-0003-3760-9859}  \and Roy Dong\inst{1}\orcidID{0000-0001-8034-4329}}
\authorrunning{Z. Huang et al.}
%
\institute{University of Illinois Urbana-Champaign, Urbana IL 61801, USA}

\maketitle              
\begin{abstract}
During crowd navigation, robot motion plan needs to consider human motion uncertainty, and the human motion uncertainty is dependent on the robot motion plan. We introduce Interaction-aware Conformal Prediction (ICP) to alternate uncertainty-aware robot motion planning and decision-dependent human motion uncertainty quantification. ICP is composed of a trajectory predictor to predict human trajectories, a model predictive controller to plan robot motion with confidence interval radii added for probabilistic safety, a human simulator to collect human trajectory calibration dataset conditioned on the planned robot motion, and a conformal prediction module to quantify trajectory prediction error on the decision-dependent calibration dataset. Crowd navigation simulation experiments show that ICP strikes a good balance of performance among navigation efficiency, social awareness, and uncertainty quantification compared to previous works. ICP generalizes well to navigation tasks under various crowd densities. The fast runtime and efficient memory usage make ICP practical for real-world applications. Code is available at \url{https://github.com/tedhuang96/icp}.
\keywords{Human-Robot Interaction  \and Collision Avoidance.}
\end{abstract}
\renewcommand{\arraystretch}{1.2} 

\section{Introduction}
Despite decades of development in robot motion planning algorithms, it is only recently that mobile robots have started navigating through crowds and serving in our daily lives because of the advancement on data-driven modeling of human motion~\cite{hart1968formal},\cite{karaman2011sampling}, \cite{bemporad2007robust},\cite{alahi2016social}, \cite{huang2022learning}. While these human models are getting more accurate, how to effectively use predicted human motion for robot motion planning remains an open research problem. A classical paradigm performs motion planning by treating the predictions as if they are ground truth future human positions~\cite{huang2023neural}, \cite{liu2023intention}. However, there always exists prediction error, so the planned robot motion does not have any safety guarantees in this paradigm. Recent efforts are focused on calibration of the prediction error with uncertainty quantification techniques like conformal prediction~\cite{vovk2005cp}. As a calibration trajectory dataset is required for conformal prediction, previous works usually suffer from distribution shift on human motion due to (1) offline human-only simulation data collection which overlooks the difference of human reactions to robot from to other humans~\cite{lindemann2023safe}, or (2) impractical amount of online human-robot interaction data required for achieving asymptotic safety guarantee~\cite{dixit2023adaptive}.

We introduce Interaction-aware Conformal Prediction (ICP) to address the distribution shift issue by alternation between (1) robot motion planning based on the human motion uncertainty and (2) human motion uncertainty quantification by online human simulation conditioned on the robot motion plan.

In the initial step, ICP assumes predictions are ground truth and generates a robot motion plan (Fig.~\ref{fig-intro} b1, b2). Given the initial robot motion plan, ICP then starts iteration: (1) simulate multiple episodes of crowd motion by assuming the robot will execute the current plan to collect the calibration dataset dependent on the current plan (Fig.~\ref{fig-intro} c1); (2) perform conformal prediction to acquire the decision-dependent confidence interval radii (Fig.~\ref{fig-intro} c2); (3) plan the robot motion by using the current confidence interval radii as the decision-dependent probabilistic safety margin (Fig.~\ref{fig-intro} c3).

By explicitly capturing the mutual influence between the robot plan and the human motion uncertainty, ICP achieves a good tradeoff among navigation efficiency, social awareness, and uncertainty quantification in contrast to previous works in crowd navigation simulation experiments. We demonstrate that ICP generalizes well to crowd scenarios of different number of humans, and its fast runtime and small GPU memory usage show the readiness of real world applications.

\begin{figure*}[hbt!]
\centering
\includegraphics[width=0.8\textwidth]{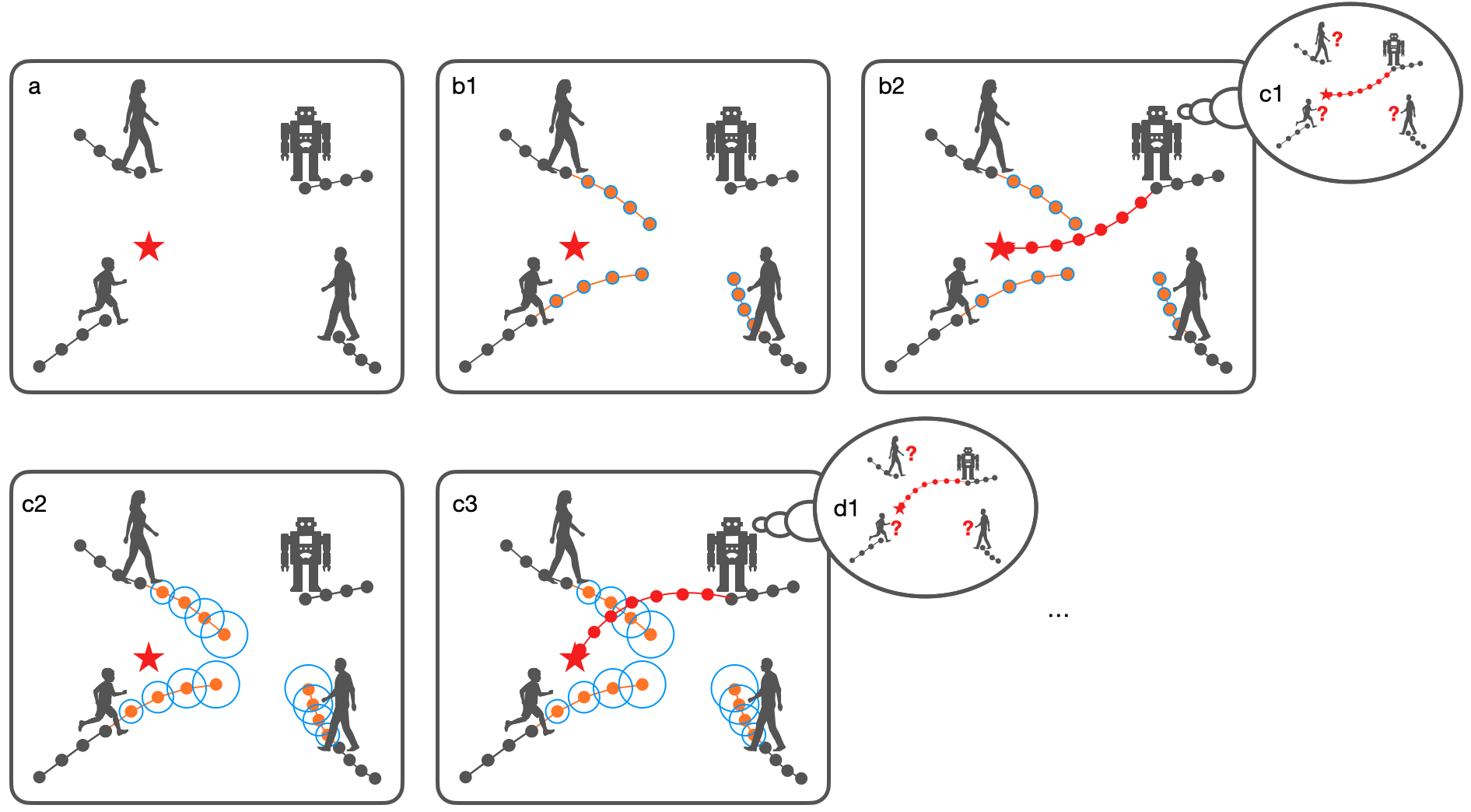}
\caption{Interaction-aware Conformal Prediction (ICP) iteratively quantifies uncertainty of human trajectory prediction by human motion simulation under the assumption that the robot would execute the latest plan, and plans robot motion with the conformal interval radii calibrated from the latest simulation dataset.} \label{fig-intro}
\end{figure*}

\section{Related Work}\label{sec-related}
\subsection{Conformal Prediction}
Conformal prediction is a statistical tool designed to produce reliable and 
valid prediction intervals or sets in machine learning. 
First introduced in \cite{vovk2005cp}, it offers 
a rigorous framework to quantify the uncertainty 
of predictions without making assumptions about 
the underlying distribution or predictive model \cite{angelopoulos2022cp}.
Due to its model-agnostic nature, conformal 
prediction has gained increasing popularity in various communities ranging from healthcare \cite{vazquez2022conformal}, \cite{olsson2022estimating},
to finance \cite{pmlr-v128-wisniewski20a}, \cite{KATH2021777}.

The application of conformal prediction has
also found great success in robotics, including combination with reachability analysis~\cite{dietterich2022conformal},~\cite{Muthali2023Multi-agent}, adding safety guarantees to trajectory prediction~\cite{lindemann2023safe},~\cite{dixit2023adaptive},~\cite{Sun2023ConformalPF}, and integration with robust control to find control laws with probabilistic safety guarantees~\cite{zhang2024distributionfree}. Most relevant to our work is the study by 
\cite{lindemann2023safe},
which applies conformal prediction and model predictive control (MPC) to plan robot motion with safety guarantees. Note that they run simulation of only human agents to collect a synthetic trajectory calibration dataset and perform conformal prediction offline. Thus, they compute fixed conformal interval radius as safety clearance for MPC. However, human agents adjust their behavior according to the robot action during human-robot interaction. A new robot plan will alter the distribution of human motion and break the guarantees offered by these fixed conformal sets.

Adaptive Conformal Prediction (ACP) attempts to address this issue by collecting human and robot trajectories, updating calibration datasets and adjusting failure probability on the fly~\cite{dixit2023adaptive}. A practical limitation of ACP is its asymptotic safety guarantee, where the average 
safety rate over all time steps approaches 
the designated safety rate as time goes to infinity. This indicates that a long warm-up period of online human-robot interaction data collection is necessary for achieving the asymptotic safety guarantee, which does not meet the efficiency requirement of crowd navigation applications. In contrast, our ICP algorithm offers distribution-free safety guarantees with robot plan refinement and human motion conformal set re-computation by leveraging online human simulation conditioned on robot plans.

\subsection{Crowd Navigation}
Various methods have been developed to enhance robot navigation in crowded environments. Reaction-based methods such as Optimal Reciprocal Collision Avoidance (ORCA) \cite{van2011reciprocal} treat agents as velocity obstacles, whereas methods like Social Force \cite{helbing1995social}, DS-RNN \cite{liu2021decentralized}, and \cite{liu2023intention} leverage attractive and repulsive forces or interaction-based graphs to model interactions between agents.

While these works have made notable contributions, their frameworks suffer from undetermined uncertainty quantification and are prone to safety problems. Hence, \cite{lindemann2023safe} and \cite{Muthali2023Multi-agent} have made use of conformal prediction to endow crowd navigation with probabilistic safety guarantees, where conformal prediction empowers their frameworks to deal with unknown data distribution.

\cite{Muthali2023Multi-agent} uses a specific prediction model and takes advantage of quantile regression models to generate approximate confidence intervals on predicted actions. Their approach is followed by implementing RollingRC, a conformal prediction method, to adjust composed intervals. Owing to the desirability of having confidence sets in the spatial domain, \cite{Muthali2023Multi-agent} and \cite{lin2024verification} use HJ reachability method to form reachable tubes for each agent. \cite{Muthali2023Multi-agent} obtains optimal trajectory for the ego agent by treating each agent's final forward reachable tube as a time-growing obstacle and maximizing the Hamiltonian.
Unlike previous works, ICP captures the interactions by iteratively computing conformal prediction sets and considering the effect of planner outputs on agents trajectories. 

\subsection{Model Predictive Control}
Model predictive control (MPC) is a control technique based on the iterative solution of an optimization problem~\cite{bemporad2007robust}. By using the system model and the current state, MPC plans the optimal control sequence based on a cost function. Due to its ability to handle multi-variable systems and state/input constraints, MPC has received considerable attention and has been studied within diverse research areas and application domains~\cite{garcia1989model,qin2003survey}. In robotics, MPC has been used to plan motions for mobile robots~\cite{chen2021interactive,sivakumar2021learned}, manipulators~\cite{incremona2017mpc}, and drones~\cite{ji2022robust,kamel2017robust}.

Existing MPC-based methods for robot navigation in social environments are often composed of two steps, prediction and planning, where the future trajectories of the surrounding agents are first predicted and then the robot action is planned by solving an optimization problem~\cite{chen2021interactive}. Park \textit{et al.} propose the model predictive equilibrium point control (MPEPC) for wheelchair robot navigation, where the uncertainty of obstacle motions is predefined and fixed~\cite{park2012robot}. Kamel \textit{et al.} employ a model-based controller to navigate a micro aerial vehicle (MAV) while avoiding collisions with other MAVs, where a constant velocity model is used for obstacle trajectory prediction and the obstacles are inflated for safety based on the uncertainty estimated by an extended Kalman filter~\cite{kamel2017robust}. A similar idea of enlarging the safety distance between the robot and a human based on the covariance of the estimated state is adopted by Toit \textit{et al.}, where several predefined dynamics are also explored for human trajectory prediction~\cite{du2011robot}. Chen \textit{et al.} propose an intention-enhanced ORCA (iORCA) as an advanced pedestrian motion model, which can dynamically adjust the preferred velocity of a pedestrian~\cite{chen2021interactive}. The predicted human trajectories from iORCA are then incorporated into an MPC framework to realize safe navigation in dense crowds. However, these approaches fail to take into account the effects of robot actions on future human trajectories, and thus the distribution shift on human behaviors exists, which can potentially lead to safety violations during execution.
\section{Introduction to Conformal Prediction}\label{sec-intro-cp}

\def\ddefloop#1{\ifx\ddefloop#1\else\ddef{#1}\expandafter\ddefloop\fi}
\def\ddef#1{\expandafter\def\csname bb#1\endcsname{\ensuremath{\mathbb{#1}}}}
\ddefloop ABCDEFGHIJKLMNOPQRSTUVWXYZ\ddefloop
\def\ddef#1{\expandafter\def\csname bf#1\endcsname{\ensuremath{\mathbf{#1}}}}
\ddefloop ABCDEFGHIJKLMNOPQRSTUVWXYZabcdefghijklmnopqrstuvwxyz\ddefloop
\def\ddef#1{\expandafter\def\csname bs#1\endcsname{\ensuremath{\boldsymbol{#1}}}}
\ddefloop ABCDEFGHIJKLMNOPQRSTUVWXYZabcdefghijklmnopqrstuvwxyz\ddefloop
\def\ddef#1{\expandafter\def\csname sf#1\endcsname{\ensuremath{\mathsf{#1}}}}
\ddefloop ABCDEFGHIJKLMNOPQRSTUVWXYZ\ddefloop
\def\ddef#1{\expandafter\def\csname c#1\endcsname{\ensuremath{\mathcal{#1}}}}
\ddefloop ABCDEFGHIJKLMNOPQRSTUVWXYZ\ddefloop

In this section, we provide a brief introduction of
conformal prediction. 
Consider a classic supervised learning setting with 
$n$ independent and identically distributed (i.i.d.) samples 
$(x_1, y_1), \dots, (x_n, y_n) \in \cX \times \cY$.
Let $\mu: \cX \to \cY$ be our predictor.
Given a new test sample $(x, y)$ drawn from the same 
distribution, we want to give a valid prediction region
for $y$ based on $x$.
Formally, for a given failure probability $\alpha \in (0, 1)$,
our goal is to find a set $C$ for $y$ such that
\begin{equation}
    Pr \left(
        y \in C
    \right)
    \ge
    1 - \alpha.
\end{equation}
Conformal prediction offers a simple way to find such $C$.
At its core, it uses the following simple fact 
about exchangeable random variables.
\begin{lemma}
    \label{lem:quantile}
    Let $X, X_1, \dots, X_n$ be exchangeable random variables. 
    Let $X_{(k)}$ be the $k$-th smallest value among
    $X_1, \dots, X_n$.
    Then we have
    \[
        Pr \left(
            X \le X_{(k)}
        \right)
        =
        \frac{k}{n+1}.
    \]
\end{lemma}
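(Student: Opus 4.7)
The plan is to reduce the claim to the fact that, under exchangeability, the rank of $X$ in the augmented sample $\{X, X_1, \dots, X_n\}$ of size $n+1$ is uniformly distributed on $\{1, \dots, n+1\}$. First I would assume, without real loss of generality, that the joint distribution assigns zero mass to ties, so that ranks are well defined without any tie-breaking convention; the conformal prediction literature typically makes an equivalent continuity assumption, and otherwise one can break ties uniformly at random in a way that preserves exchangeability.

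Next I would let $R \in \{1, \dots, n+1\}$ denote the rank of $X$ in the combined sample. Since the $n+1$ variables are exchangeable, their joint distribution is invariant under coordinate permutations, and a direct symmetry argument (each of the $n+1$ positions is equally likely to be occupied by $X$) gives $\Pr(R = r) = 1/(n+1)$ for every $r \in \{1, \dots, n+1\}$.

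The third step is to translate the event $\{X \le X_{(k)}\}$ into an event about $R$. Under the no-ties assumption, exactly $R-1$ of the variables $X_1, \dots, X_n$ are strictly less than $X$, so $X$ lies at or below the $k$-th smallest of the $X_i$'s precisely when $R - 1 \le k - 1$, i.e.\ $R \le k$. Combining this with the uniform distribution of $R$ then yields
\[
\Pr(X \le X_{(k)}) \;=\; \Pr(R \le k) \;=\; \sum_{r=1}^{k} \frac{1}{n+1} \;=\; \frac{k}{n+1}.
\]

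I expect the main obstacle to be notational rather than conceptual: carefully formalizing the rank $R$ in the presence of possible ties so that one still obtains equality, rather than merely an inequality, in the displayed line. Once ties are handled --- either by a continuity assumption on the marginals or by a symmetric randomized tie-break --- the remaining argument reduces to a pure permutation-symmetry calculation over $\{1, \dots, n+1\}$, and no further probabilistic input is needed.
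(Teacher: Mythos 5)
Your proof is correct and follows essentially the same route as the paper's: both arguments rest on the permutation symmetry of the $n+1$ exchangeable variables making each rank (equivalently, each position in the unordered sample) equally likely, with the same no-ties simplification. Your phrasing via the uniform rank $R$ and the equivalence $\{X \le X_{(k)}\} = \{R \le k\}$ is just a cleaner packaging of the paper's conditioning on the multiset of observed values.
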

\begin{proof}
For simplicity, assume that there are no ties among 
$X, X_1, \dots, X_n$ almost surely. The same arguments would still
apply but with more complicated notations.

Let $f$ be the joint density of $X, X_1, \dots, X_n$. Consider the 
event $E$ that $\{X, X_1, \dots, X_n\} = \{x_0, x_1, \dots, x_n\}$.
By exchangeability of $X, X_1, \dots, X_n$, we have
\[
    f(x_0, x_1, \dots, x_n) 
    = 
    f(x_{\sigma(0)}, x_{\sigma(1)}, \dots, x_{\sigma(n)}).
\]
Thus, for any permutation $\sigma$ of $0, \dots, n$.
Thus, for any $i \in \{0, \dots, n\}$, we have
\begin{equation}
\begin{aligned}
    Pr \left(
        X = x_i | E
    \right)
    &=
    \frac{
        \sum_{\sigma: \sigma(0) = i}
        f(x_{\sigma(0)}, x_{\sigma(1)}, \dots, x_{\sigma(n)})
    }{
        \sum_\sigma
        f(x_{\sigma(0)}, x_{\sigma(1)}, \dots, x_{\sigma(n)})
    } \\
    &=
    \frac{n!}{(n+1)!} \\
    &=
    \frac{1}{n+1}.
\end{aligned}
\end{equation}
It then follows that
\[
    Pr \left(
        \left. X \le x_{(k)} \right| E
    \right)
    =
    Pr \left(
        \left. X \le X_{(k)} \right| E
    \right)
    = \frac{k}{n+1}.
\]
But this holds for any other event $E'$ such that 
$\{X, X_1, \dots, X_n\} = \{x'_0, x'_1, \dots, x'_n\}$.
Therefore, we can marginalize and get
\[
    Pr \left(
        X \le X_{(k)}
    \right)
    = \frac{k}{n+1}.
\]\qed
\end{proof}
Let $\ell: \cY \times \cY \to \bbR$ be the nonconformity measure
that quantifies the quality of our prediction.
For example, one commonly used nonconformity measure is
the Euclidean distance: 
$\ell(y, \mu(x)) = \|y - \mu(x)\|_2$.
Let $s_i$ denote the nonconformity score of the $i$-th sample,
i.e. $s_i := \ell(y_i, \mu(x_i))$,
and let $s$ be the nonconformity score of $(x, y)$.
The main result offered by conformal prediction is the following:
\begin{theorem}
    \label{thm:cp}
    Given i.i.d. samples $(x_1, y_1), \dots, (x_n, y_n)$,
    a test sample $(x, y)$ from the same distribution,
    a predictor $\mu$,
    a nonconformity measure $\ell$ and corresponding
    nonconformity scores $s_1, \dots, s_n$.
    For a given failure probability $\alpha \in (0, 1)$,
    let $q := \lceil (n + 1) (1 - \alpha) \rceil$.
    Then the set
    \[
        C = 
        \left\{
            \hat{y} \in \cY: \ell(\hat{y}, \mu(x)) 
            \le 
            s_{(q)}
        \right\}
    \]
    satisfies 
    \[
        Pr \left(
            y \in C
        \right)
        \ge 1 - \alpha.
    \]
\end{theorem}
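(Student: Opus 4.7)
The plan is to reduce the statement about $y \in C$ to a quantile statement about nonconformity scores, then apply Lemma \ref{lem:quantile}.

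First, I would rewrite the event $\{y \in C\}$ purely in terms of nonconformity scores. By the definition of $C$ in the theorem and of the score $s := \ell(y, \mu(x))$, we have $y \in C$ if and only if $s \le s_{(q)}$. This turns the goal into showing $\Pr(s \le s_{(q)}) \ge 1 - \alpha$, which is a statement only about the $n+1$ real-valued random variables $s_1, \dots, s_n, s$.

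Second, I would establish that these $n+1$ scores are exchangeable. Because $(x_1, y_1), \dots, (x_n, y_n), (x, y)$ are i.i.d.\ from the same distribution on $\cX \times \cY$, they are jointly exchangeable. Each score is obtained by applying the same deterministic map $(x', y') \mapsto \ell(y', \mu(x'))$, and since applying a common function coordinatewise preserves exchangeability of a tuple, $s_1, \dots, s_n, s$ are exchangeable. Now Lemma \ref{lem:quantile} applies directly with $X = s$, $X_i = s_i$, and $k = q$, yielding $\Pr(s \le s_{(q)}) = q/(n+1)$.

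Third, I would close the gap between $q/(n+1)$ and $1 - \alpha$ using the definition $q = \lceil (n+1)(1-\alpha) \rceil$. Since the ceiling satisfies $q \ge (n+1)(1-\alpha)$, dividing by $n+1$ gives $q/(n+1) \ge 1 - \alpha$, which combined with the previous step completes the chain $\Pr(y \in C) = \Pr(s \le s_{(q)}) = q/(n+1) \ge 1 - \alpha$.

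The main obstacle is essentially conceptual rather than technical: recognizing that the exchangeability hypothesis of Lemma \ref{lem:quantile} transfers from raw samples to nonconformity scores despite the scores being derived through the common predictor $\mu$ and measure $\ell$. A minor nuisance is that Lemma \ref{lem:quantile} is stated under a no-ties assumption, whereas ties among the $s_i$ are possible in general; this is handled by a standard tie-breaking argument (e.g., infinitesimal randomization) which upgrades the equality to the inequality $\Pr(s \le s_{(q)}) \ge q/(n+1)$, still sufficient for the conclusion.
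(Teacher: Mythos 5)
Your proposal is correct and follows essentially the same route as the paper's own proof: reduce $\{y \in C\}$ to $\{s \le s_{(q)}\}$, invoke Lemma~\ref{lem:quantile} on the exchangeable scores, and bound $q/(n+1) \ge 1-\alpha$ via the ceiling. You are in fact slightly more careful than the paper, which justifies exchangeability only for $s_1,\dots,s_n$ rather than for the full collection including the test score $s$, and you also flag the tie-breaking issue that the paper's lemma defers; neither difference changes the substance of the argument.
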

\begin{proof}
    Since $s_1, \dots, s_n$ are i.i.d., they are 
    exchangeable.
    Then applying Lemma~\ref{lem:quantile}, we get
    \[
        Pr \left(
            y \in C
        \right)
        =
        Pr \left(
            s \le s_{(q)}
        \right)
        =
        \frac{q}{n+1}
        \ge
        1 - \alpha.
    \]\qed
\end{proof}
This result offers a potential way to construct safety sets with rigorous probability guarantees purely from samples. This alleviates the need to know the underlying distribution, which can be really complex for robotic systems.
\section{Method}\label{sec-method}

\subsection{Problem Formulation}

In crowd navigation, a mobile robot navigates to a goal position $g$ without colliding with any of $N$ humans moving in a shared 2D space. The position of the robot is denoted as $x_r^t$, and the positions of humans are denoted as $x_{h,i}^t$, $i\!\in\!\{1, \ldots, N\}$. The robot has a speed limit $v_{max}$. We need to plan velocity action $v_r^t$ given $\left(g, x_r^{1:t}, x_{h,1:N}^{1:t}\right)$, which the robot takes to reach the position at the next time step $x_r^{t+1}$.

\subsection{Preliminaries}

\textbf{Trajectory Prediction.} A trajectory prediction model $TP$ offers explicit modeling of human motion in a near future for planning socially aware robot motion. $TP$ takes robot and human positions in an observation window of length $T_{obs}$ as input, and predict human positions in a future time window of length $T_{pred}$. Our algorithm can take arbitrary models as $TP$. In this work, we use a learning-based trajectory prediction model Gumbel Social Transformer (GST)~\cite{huang2022learning}, which captures social interaction among multiple agents. Note we use a pre-trained GST with frozen weights which is only for inference.

\begin{equation}
    \hat{x}_{h,1:N}^{t+1:t+T_{pred}} = TP\left(x_r^{t-T_{obs}+1:t}, x_{h,1:N}^{t-T_{obs}+1:t}\right)
\end{equation}

\textbf{Human Simulator.} We use a human simulator to generate synthetic human trajectories for conformal prediction. We initialize the robot and the human positions in simulation as $x_r^{t}$ and $x_{h,1:N}^{t}$. Assuming that we have generated a robot plan, we enforce the robot to move along the planned robot trajectory $\bar{x}_{r}^{t+1:t+T_{plan}}$ and run ORCA~\cite{van2011reciprocal} to generate actions for each human to interact with the robot and human neighbors, so we can collect simulated human trajectories $x_{h, 1:N, sim}^{t+1:t+T_{plan}}$ at the end of one simulation episode. ORCA is a multi-agent motion planning algorithm which applies low-dimensional linear programming to compute velocities for all agents to reach respective goals without collision. ORCA has been extensively applied as a crowd simulator for training reinforcement learning crowd navigation policies that are successfully transferred to real world application without fine-tuning~\cite{chen2019crowd}, ~\cite{liu2021decentralized}, ~\cite{liu2023intention}. We randomize human goals and run multiple simulation episodes, and use a sliding window of length $T_{obs}+T_{pred}$ to split the collected trajectories into batches to create a trajectory prediction calibration dataset $\mathcal{D}$.

\textbf{Conformal Prediction.} The trajectory prediction calibration dataset $\mathcal{D}$ includes $M$ samples of past robot trajectories, past human trajectories and future human trajectories. We use $TP$ to make trajectory prediction for each sample, and compute prediction errors of each human's position at each prediction time step in each sample.

\begin{equation}
    e_{i}^{\tau, j} = ||\hat{x}_{h,i}^{T_{obs}+\tau, j} - x_{h,i}^{T_{obs}+\tau, j}||_2, i\!\in\!\{1, \ldots, N\}, j\!\in\!\{1, \ldots, M\}, \tau\!\in\!\{1, \ldots, T_{pred}\}
\end{equation}

We aggregate the prediction errors in terms of the prediction time step $\mathcal{E}^\tau = \{e_{i}^{\tau, j}\}_{i=1:N,j=1:M}$. We assume the errors in $\mathcal{E}^\tau$ are from an exchangeable probability distribution, and sort the errors in a non-decreasing order $\{{e}_{(l)}^{\tau}\}_{l=1:N\times M}$. For a given failure probability $\alpha$, we define confidence interval radius at each prediction time step

\begin{equation}
r_{cp}^{\tau} := {e}_{(\lceil(1-\alpha)(N\times M)\rceil)}, \quad \tau\!\in\!\{1, \ldots, T_{pred}\}
\end{equation}
By treating the trajectory prediction error for any human $i$ at the current time step $t$ as the $N\times M+1$th sample from the exchangeable error distribution, we achieve probabilistic guarantees for prediction at each time step
\begin{equation}
Pr\left(||\hat{x}_{h,i}^{t+\tau}-x_{h,i}^{t+\tau}||_2 \leq
r_{cp}^{\tau}\right) = Pr\left({e}_{(N\times M+1)} \leq
{e}_{(\lceil(1-\alpha)(N\times M)\rceil)}\right) \geq 1 - \alpha
\end{equation}

Note that in ICP, the calibration dataset is composed of the robot plan generated by model predictive control, and the human trajectories simulated based on the assumption that the robot will execute the generated plan.

\textbf{Model Predictive Control.} The model predictive control module (MPC) plans robot motion to reach the goal while satisfying dynamics constraints, control limit constraints, and collision avoidance constraints. In the collision avoidance constraints, $r_r$ is robot radius and $r_h$ is human radius. The conformal interval radii $r_{cp}^{\tau}$'s are incorporated in the collision avoidance constraints to inform MPC about the uncertainty on the predicted human positions. In the optimization problem presented in Equation~\ref{mpc-optimization-v2}, the step-wise cost function includes the goal-reaching cost, the velocity jerk cost, and a regularization cost from the last round of MPC solution, since our algorithm iteratively runs MPC. The regularization cost helps constrain the change of robot plan through iterations, which prevents the drastic oscillation of conformal interval radii and facilitates convergence. The regularization cost is ignored when it is the first round of MPC. Note the collision avoidance constraints make the optimization problem non-convex, but there are usually feasible solutions in practice. 

\vspace{-10pt}
\begin{equation}
\label{mpc-optimization-v2}
\begin{aligned}
\operatorname*{minimize}_{\bar{\mathbf{x}}_r, \bar{\mathbf{v}}_r} \,
& \sum_{\tau=t}^{t+T_{mpc}} \omega_\text{g} ||\bar{x}_r^{\tau} - g||_2^2
+
\sum_{\tau=t}^{t+T_{mpc}-2} \omega_\text{v} ||\bar{v}_r^{\tau + 1} - \bar{v}_r^{\tau}||_2^2 \\
& \qquad \qquad \qquad \qquad \qquad \qquad \qquad \qquad +
\sum_{\tau=t}^{t+T_{mpc}} \omega_\text{reg} ||\bar{x}_r^{\tau} - \bar{x}_{r, k-1}^{\tau}||_2^2 \\
\text{subject to} \quad &\bar{x}_r^{\tau+1} = \bar{x}_r^\tau + \bar{v}_r^\tau \, \Delta T, \quad \tau = t, \ldots, t+T_{mpc}-1, \\
&||\bar{v}_r^\tau||_2 \leq v_{max}, \quad \tau = t, \ldots, t+T_{mpc}-1, \\
&||\bar{x}_r^{t + \tau} - \hat{x}_{h, i}^{t + \tau}||_2 \geq r_r + r_h + r_{cp}^\tau, \quad i = 1, \dots, N, \; \; \tau = 1, \dots, T_{pred}, \\
& \bar{x}_r^t = x_r^t.
\end{aligned}
\end{equation}
where the optimization variable $\bar{\mathbf{x}}_r = (\bar{x}_r^t, \bar{x}_r^{t + 1}, \dots \bar{x}_r^{t + T_{mpc}})$ is the planned robot trajectory, $\bar{\mathbf{v}}_r = (\bar{v}_r^t, \bar{v}_r^{t + 1}, \dots, \bar{v}_r^{t + T_{mpc} - 1})$ is the planned robot velocity, and $\Delta T$ is the time interval between two points on the planned robot trajectory.

\subsection{Interaction-aware Conformal Prediction}
To explicitly address the mutual influence between the robot and the humans during the interaction, Interaction-aware Conformal Prediction (ICP) alternates model predictive control for robot motion planning and conformal prediction for human trajectory prediction, which is presented in Algorithm~\ref{algo:ICP}.

At time $t$, we first feed observed robot and human trajectories into $TP$ to generate predictions of human trajectories $\hat{x}_{h, 1:N}^{t+1:t+T_{pred}}$. With the predictions, we run MPC by assuming confidence interval radii as zero and obtain a nominal robot trajectory $\bar{x}_{r, 0}^{t+1:t+T_{mpc}}$. This nominal robot plan does not have any safety guarantees yet, because no uncertainty quantification has been done for the predicted human trajectories used in MPC.

We introduce an inner iteration to iteratively calibrate the uncertainty of the human trajectory prediction and finetune the robot plan. We simulate crowd motions reacting to the most recent robot plan $\bar{x}_{r, {k-1}}^{t+1:t+T_{mpc}}$ and to collect a trajectory prediction calibration dataset $\mathcal{D}_k$. We perform conformal prediction for the prediction model $TP$ on the calibration dataset $\mathcal{D}_k$ to calculate the conformal interval radii $r_{cp, k}^{1:T_{pred}}$. We then run MPC with the collision avoidance constraints incorporating the updated conformal interval radii, and the regularization cost with respect to the latest MPC solution to generate a new robot plan $\bar{x}_{r, {k}}^{t+1:t+T_{mpc}}$. The new robot plan is used to initiate the next iteration until an iteration limit is reached, or the robot plan and the conformal prediction radii converge. The robot will execute the actions of next $T_{exec}$ steps $\bar{v}_{r, K}^{t:t+T_{exec}-1}$ of the final robot plan generated from the iterations.
\vspace{-15pt}

\begin{algorithm}[hbt!]
    \caption{Interaction-aware Conformal Prediction}\label{algo:ICP}
    \begin{algorithmic}
    \State Load a pre-trained \textbf{trajectory prediction} model $\textit{\textbf{TP}}$.
    \State Set terminal constraints $g$, and control limit constraints $v_{max}$ for model predictive control.
    \For {$t = 1$ to $T$}
        \State Predict future human trajectories $\hat{x}_{h, 1:N}^{t+1:t+T_{pred}}$ by taking past robot and human trajectories as input to $\textit{\textbf{TP}}$.
        \State Initialize conformal interval radii $r_{cp, 0}^{1:T_{pred}}$ as zero.
        \State Set the initial constraints of model predictive control with robot position $x_r^t$.
        \State Set the collision avoidance constraints of model predictive control with predicted human trajectories $\hat{x}_{h, 1:N}^{t+1:t+T_{pred}}$, human radius, robot radius, and the initialized conformal interval radii $r_{cp, 0}^{1:T_{pred}}$.
        \State Run \textbf{model predictive control} to generate action sequence $\bar{v}_{r, 0}^{t:t+T_{mpc}-1}$ and corresponding robot trajectory $\bar{x}_{r, 0}^{t+1:t+T_{mpc}}$.
        \For {$k = 1$ to $K$}
            \State \textbf{Simulate human motion} by assuming robot executes the plan $\bar{x}_{r, k-1}^{t+1:t+T_{mpc}}$ with multiple runs, and collect a trajectory prediction calibration dataset $\mathcal{D}_k$.
            \State Run \textbf{conformal prediction} by evaluating $TP$ on $\mathcal{D}_k$, collecting trajectory prediction errors, and computing the $k$th conformal interval radii $r_{cp, k}^{1:T_{pred}}$ with safe probability $1-\alpha$.
            \State Update the collision avoidance constraints of model predictive control with the $k$th conformal interval radii $r_{cp, k}^{1:T_{pred}}$.
            \State Run \textbf{model predictive control} to generate the $k$th action sequence $\bar{v}_{r, k}^{t:t+T_{mpc}-1}$ and the $k$th robot trajectory $\bar{x}_{r, k}^{t+1:t+T_{mpc}}$.
        \EndFor
        \State The robot executes the actions $\bar{v}_{r, K}^{t:t+T_{exec}-1}$.
    \EndFor
 \end{algorithmic}
 \end{algorithm}
\vspace{-15pt}

When Algorithm~\ref{algo:ICP} converges in the sense that the planned trajectory
from the last iteration induces the same human behavior as 
the previous iteration, then we have the following safety guarantee
\begin{theorem}
    Assume that Algorithm~\ref{algo:ICP} converges at time $t$,
    and the optimization problem in Equation \ref{mpc-optimization-v2} is feasible
    at $t$ with prediction horizon $T_{pred}$. Then the planned trajectory $\bar{x}_r^{t+1:t+T_{mpc}}$ satisfies
    \[
        Pr \left(
            ||\bar{x}_r^{t+\tau} - x_{h, i}^{t+\tau}||_2
            \geq r_r + r_h,
            \forall
            \tau \in
            [T_{pred}],
            \forall
            h \in
            [N]
        \right)
        \ge
        1 - \alpha N T_{pred}.
    \]
\end{theorem}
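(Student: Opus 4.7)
The plan is to reduce the theorem to a single application of Theorem~\ref{thm:cp} per (human, timestep) pair plus a union bound, with convergence supplying the exchangeability that conformal prediction requires.

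First I would unpack what convergence buys us. Fix the iteration $K$ at which Algorithm~\ref{algo:ICP} converges. By the stated convergence condition, the simulated humans in $\mathcal{D}_K$ react to exactly the same planned robot trajectory $\bar{x}_{r,K}^{t+1:t+T_{mpc}}$ that the robot will actually execute. Under the exchangeability assumption already made in Section~4.2 on the nonconformity scores, the prediction error $e_i^{\tau,j}$ obtained from the $j$-th calibration episode is then exchangeable with the true prediction error $\|\hat{x}_{h,i}^{t+\tau} - x_{h,i}^{t+\tau}\|_2$ at deployment, for every fixed human index $i$ and every horizon $\tau \in [T_{pred}]$. This is the only place where the convergence hypothesis is used, and it is also the main conceptual obstacle: without it, the offline calibration distribution is different from the deployment distribution and the conformal guarantee breaks.

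Next, for each fixed $(i,\tau)$, I would invoke Theorem~\ref{thm:cp} with $\ell$ equal to the Euclidean nonconformity score and with $r_{cp,K}^\tau$ playing the role of $s_{(q)}$ for $q=\lceil(1-\alpha)(NM)\rceil$. This immediately yields
\begin{equation}
    Pr\bigl(\|\hat{x}_{h,i}^{t+\tau} - x_{h,i}^{t+\tau}\|_2 \le r_{cp,K}^\tau\bigr) \ge 1-\alpha.
\end{equation}
Then I would combine this with feasibility of the MPC program at time $t$: the planned trajectory satisfies $\|\bar{x}_r^{t+\tau} - \hat{x}_{h,i}^{t+\tau}\|_2 \ge r_r+r_h+r_{cp,K}^\tau$ for every $i$ and every $\tau \in [T_{pred}]$. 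The reverse triangle inequality gives, on the event that the prediction error is within $r_{cp,K}^\tau$,
\begin{equation}
    \|\bar{x}_r^{t+\tau} - x_{h,i}^{t+\tau}\|_2 \ge \|\bar{x}_r^{t+\tau} - \hat{x}_{h,i}^{t+\tau}\|_2 - \|\hat{x}_{h,i}^{t+\tau} - x_{h,i}^{t+\tau}\|_2 \ge r_r+r_h.
\end{equation}
So the bad event of collision at $(i,\tau)$ is contained in the bad event $\{\|\hat{x}_{h,i}^{t+\tau} - x_{h,i}^{t+\tau}\|_2 > r_{cp,K}^\tau\}$, which has probability at most $\alpha$.

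Finally I would close with a union bound over the $N\cdot T_{pred}$ pairs $(i,\tau)$: the probability that any one of them fails is at most $\alpha N T_{pred}$, hence the probability that all collision-avoidance inequalities hold simultaneously is at least $1 - \alpha N T_{pred}$, as claimed. The routine calculations here are standard, so I would only flag that the per-horizon conformal quantile must be taken from the aggregated error set $\mathcal{E}^\tau$ defined in Section~4.2, which is what makes the single per-$(i,\tau)$ application of Theorem~\ref{thm:cp} legitimate once exchangeability is granted by convergence.
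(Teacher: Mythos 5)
Your proposal is correct and follows essentially the same route as the paper's proof: a per-$(i,\tau)$ application of Theorem~\ref{thm:cp} (with convergence supplying the validity of the calibration distribution), the MPC feasibility constraint, and a union bound over the $N \cdot T_{pred}$ pairs. You are in fact slightly more careful than the paper in spelling out the reverse triangle inequality step that connects the constraint $\|\bar{x}_r^{t+\tau} - \hat{x}_{h,i}^{t+\tau}\|_2 \ge r_r + r_h + r_{cp}^{\tau}$ to the conclusion, which the paper leaves implicit.
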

\begin{proof}
    By the convergence assumption and Theorem~\ref{thm:cp}, 
    for each $h \in \{1, \dots, N\}$ and each 
    $\tau \in \{1, \dots, T_{pred}\}$,
    we have
    \[
        Pr \left(
            ||\hat{x}_{h, i}^{t+\tau} - x_{h, i}^{t+\tau}||_2
            \le r^{\tau}_{cp}
        \right)
        \ge
        1 - \alpha.
    \]
    Further, from the optimization constraints in Equation~\ref{mpc-optimization-v2}, we have
    \[
        ||\bar{x}_r^{t+\tau} - \hat{x}_{h, i}^{t+\tau}||_2
        \geq r_r + r_h + r^{\tau}_{cp},
    \]
    and thus
    \[
        Pr \left(
            ||\bar{x}_r^{t+\tau} - x_{h, i}^{t+\tau}||_2
            \geq r_r + r_h
        \right)
        \ge
        1 - \alpha.
    \]
    Then taking the union bound over $\tau$ and $h$, we get our result.
    \qed
\end{proof}
\section{Experiments}\label{sec-experiments}

\subsection{Experiment Setup}

\textbf{Simulation Environment.} We conduct crowd navigation simulation experiments for evaluation. In each test case, the initial and goal positions of the robot and the humans are randomized. The humans are controlled by ORCA~\cite{van2011reciprocal} to react to each other and the robot. Both robot radius $r_r$ and human radius $r_h$ are set as $0.4\,m$. Both robot max speed and human max speed are set as $v_{max} = 1 \,m/s$. We apply holonomic kinematics to both robot and humans. One time step $\Delta t$ is set as $0.25\,s$. We run 100 test cases for each configuration of ICP and each baseline to report the performance.

\textbf{Baselines.} We show effectiveness of alternating conformal prediction and planning for interaction awareness by comparing ICP to the following methods: 

\begin{itemize}
    \item Offline Conformal Prediction (OffCP) \cite{lindemann2023safe}: an offline method to pre-compute conformal interval radius, and use the fixed conformal interval radius throughout planning and execution. Note OffCP performs simulation of crowd motion among human agents without the robot agent offline, of which the data distribution ignores interaction between robot and humans.  
    \item Adaptive Conformal Prediction (ACP) \cite{dixit2023adaptive}: an online method which adaptively modifies the failure probability $\alpha$ to adjust the conformal interval radius based on conformal prediction from the dataset formed by most recent human and robot trajectories. The original work did not discuss how to compute gradient on the failure probability for multiple humans scenario. We propose two versions: ACP-A averages the gradients computed for each human whether their trajectory prediction error is within the conformal interval radius; ACP-W takes the worst possible gradient, by treating all humans have prediction error beyond the conformal interval radius whenever any of them has prediction error beyond the conformal interval radius. Note that ACP-W retains the asymptotically probabilistic safety guarantees, while ACP-A does not.
    \item Optimal Reciprocal Collision Avoidance (ORCA) \cite{van2011reciprocal}: a reactive navigation method based on the assumption that all agents are velocity obstacles which make similar reasoning on collision avoidance. The robot ORCA configuration is set the same as the human ORCA configuration in simulation experiments. 
\end{itemize}

\textbf{Metrics.} We use metrics in terms of navigation efficiency, social awareness and uncertainty quantification to comprehensively evaluate our method compared to the baselines. We use success rate (SR), robot navigation time (NT) and navigation path length (PL) as performance metrics. SR is the ratio of test cases where the robot successfully reaches the goal without collision with humans. We use intrusion time ratio (ITR) and social distance during intrusion (SD) adopted in \cite{liu2023intention} as social awareness metrics. ITR is the ratio between the number of time steps when the robot collide with any human's ground truth future positions in the prediction horizon window of $T_{pred}$, and the number of time steps for robot to reach the goal. SD is the average distance between the robot and the closest human during intrusion. We use coverage rate (CR) to check the performance on uncertainty quantification. For each human $i$ at time $t$, we check whether the trajectory prediction within the prediction horizon are within the computed conformal interval radii of the ground truth future positions. We average across all $N$ humans through the whole time period $T$ to obtain the coverage rate of one test case. Note the unit of NT is second, and the unit of PL and SD is meter.
\begin{equation}
    CR = \frac{1}{T \times N} \sum_{t=1}^{T} \sum_{i=1}^{N} \prod_{\tau=1}^{T_{pred}} \mathbf{1}[||\hat{x}_{h, i}^{t+\tau} - x_{h, i}^{t+\tau}||_2 < r_{cp}^{\tau}]
\end{equation}

\textbf{Implementation Details.} We set the number of humans as 5, 10, 15, and 20, and run 100 test cases for each crowd setup. We set the number of iterations as 1, 3, and 10 for ICP. Note 1 iteration is also interaction-aware, because it includes two rounds of MPC and one round of simulation. The size of the calibration dataset is adjusted by controlling the number of episodes to run in the simulator, which we define as calibration size (CS). The calibration size is tested across 2, 4, 8, 16, 32, and 64 with 100 test cases run for each configuration. We evaluate two types of execution scheme (ES). The first type is we execute a sequence of actions $\bar{v}_{r, K}^{t:t+T_{pred}-1}$ of length $T_{pred}$ from the robot plan, which is named as Pred-Step Execution (PSE). The second type is we execute only one step of action $\bar{v}_{r, K}^{t}$, which is named as Single-Step Execution (SSE). We set prediction horizon $T_{pred}$ as 5 time steps (1.25 second). The failure probability $\alpha$ is set as 0.05 for all conformal prediction related methods. Based on union bound argument, we can bound the probability that a human trajectory stays within the conformal radius of the entire prediction horizon as follows
\begin{equation}
    \label{eq:coverage}
    Pr\left(
        ||\hat{x}_{h,i}^{t+\tau}-x_{h,i}^{t+\tau}||_2 
        \leq
        r_{cp}^{\tau},
        \forall
        \tau \in \{1, \dots, T_{pred}\}
    \right) 
    \geq 1 - \alpha T_{pred}.
\end{equation}
Thus, the lower bound of coverage rate is $1-0.05\times 5 = 0.75$.

The human simulator used in ICP is run separately from the experiment simulator. In the ICP simulator, we add noises to human goals at random steps during each episode to add randomness to the human behavior and diversify the collected data for calibration. We run the ICP simulator in multiple threads to parallelize the calibration data collection process. When the calibration size is less than 8, the number of threads is equal to the calibration size. Otherwise, we set the number of threads as 8. 

We set the weight parameters in the cost function of MPC $\omega_{g}$ as 1, $\omega_{v}$ as 5, and $\omega_{reg}$ as 0.5 across ICP, ACP and OffCP. To handle the cases when the constraints are too extreme and there are no feasible MPC solutions, we cache the most recent feasible plans for execution. We tune parameters of ACP-A and ACP-W, where the learning rate of ACP-A is set as 0.05, and the learning rate of ACP-W is set as 0.01. The time window used for online calibration dataset collection is set as 30 time steps (7.5 second) for both ACP-A and ACP-W. 

\subsection{Experiment Results}
We report quantitative performance of ICP with different configurations and baselines in 10-human crowd test cases for both PSE and SSE scheme in Table~\ref{table-quantitative}. 
\begin{table}[hbt!]
\caption{Performance of ICP with different configurations and baseline algorithms in 100 crowd navigation test cases of 10 humans. The subscript of ICP denotes the index of configuration. NI denotes number of iterations, CS denotes calibration size, ES denotes execution scheme, SR denotes success rate, ITR denotes intrusion time ratio, SD denotes social distance, PL denotes robot path length, NT denotes robot navigation time. CR denotes coverage rate, where 0.75 is the lower bound corresponding to the failure probability $\alpha$ as 0.05. The best performance for PSE and SSE configurations are independently highlighted.}\label{table-quantitative}
\begin{center}
\fontsize{7}{10}\selectfont
\begin{tabular}{c|ccc|cccccc}
    \hline
    Method & NI & CS & ES & SR$\uparrow$ & ITR$\downarrow$ & SD$\uparrow$ & PL$\downarrow$ & NT$\downarrow$ & CR$\uparrow$ (0.75) \\
    \hline
    \, ORCA\, & \,-\, & \,-\, & \,-\, & \,\textbf{0.99}\, & \,0.26$\pm$0.17\, & \,1.23$\pm$0.10\, & \,12.66$\pm$1.24\, & \,17.48$\pm$4.02\, & \,- \\
    \, OffCP\, & \,-\, & \,8\, & \,PSE\, & \,\textbf{0.99}\, & \,0.17$\pm$0.14\, & \,1.28$\pm$0.14\, & \,\textbf{12.57$\pm$0.62}\, & \,11.35$\pm$1.65\, & \,0.85$\pm$0.08 \\
    \, ACP-A\, & \,-\, & \,-\, & \,PSE\, & \,\textbf{0.99}\, & \,0.16$\pm$0.13\, & \,1.29$\pm$0.14\, & \,12.89$\pm$1.38\, & \,11.60$\pm$2.29\, & \,0.89$\pm$0.04 \\
    \, ACP-W\, & \,-\, & \,-\, & \,PSE\, & \,0.98\, & \,0.16$\pm$0.14\, & \,1.30$\pm$0.14\, & \,12.96$\pm$1.43\, & \,11.66$\pm$2.32\, & \,0.91$\pm$0.04 \\
    \, ICP$_1$\, & \,3\, & \,8\, & \,PSE\, & \,0.98\, & \,0.15$\pm$0.12\, & \,\textbf{1.32$\pm$0.16}\, & \,12.59$\pm$0.67\, & \,11.12$\pm$1.25\, & \,\textbf{0.93$\pm$0.05} \\
    \hline
    \, ICP$_2$\, & \,3\, & \,2\, & \,PSE\, & \,0.97\, & \,0.15$\pm$0.12\, & \,\textbf{1.32$\pm$0.15}\, & \,12.68$\pm$0.98\, & \,11.21$\pm$1.64\, & \,\textbf{0.93$\pm$0.05} \\
    \, ICP$_3$\, & \,3\, & \,4\, & \,PSE\, & \,0.96\, & \,\textbf{0.14$\pm$0.12}\, & \,\textbf{1.32$\pm$0.16}\, & \,12.61$\pm$0.61\, & \,11.13$\pm$1.32\, & \,\textbf{0.93$\pm$0.05} \\
    \, ICP$_4$\, & \,3\, & \,16\, & \,PSE\, & \,0.97\, & \,0.15$\pm$0.12\, & \,\textbf{1.32$\pm$0.16}\, & \,12.59$\pm$0.62\, & \,\textbf{11.09$\pm$1.28}\, & \,\textbf{0.93$\pm$0.05} \\
    \, ICP$_5$\, & \,3\, & \,32\, & \,PSE\, & \,0.96\, & \,0.15$\pm$0.12\, & \,\textbf{1.32$\pm$0.16}\, & \,12.61$\pm$0.78\, & \,11.11$\pm$1.35\, & \,\textbf{0.93$\pm$0.05} \\
    \, ICP$_6$\, & \,3\, & \,64\, & \,PSE\, & \,0.96\, & \,0.15$\pm$0.12\, & \,\textbf{1.32$\pm$0.16}\, & \,12.62$\pm$0.78\, & \,11.12$\pm$1.36\, & \,\textbf{0.93$\pm$0.05} \\
    \hline
    \, ICP$_7$\, & \,1\, & \,8\, & \,PSE\, & \,0.98\, & \,0.16$\pm$0.12\, & \,\textbf{1.32$\pm$0.16}\, & \,12.61$\pm$0.61\, & \,11.16$\pm$1.28\, & \,\textbf{0.93$\pm$0.05} \\
    \, ICP$_8$\, & \,10\, & \,8\, & \,PSE\, & \,\textbf{0.99}\, & \,0.16$\pm$0.12\, & \,1.30$\pm$0.15\, & \,12.58$\pm$0.55\, & \,11.10$\pm$1.20\, & \,\textbf{0.93$\pm$0.05} \\
    \hline
    \hline
    \, OffCP\, & \,-\, & \,8\, & \,SSE\, & \,\textbf{1.00}\, & \,0.16$\pm$0.13\, & \,1.29$\pm$0.14\, & \,\textbf{12.39$\pm$0.56}\, & \,\textbf{11.25$\pm$1.22}\, & \,0.82$\pm$0.08 \\
    \, ACP-A\, & \,-\, & \,-\, & \,SSE\, & \,0.98\, & \,0.15$\pm$0.12\, & \,1.27$\pm$0.14\, & \,12.58$\pm$0.79\, & \,11.58$\pm$1.45\, & \,0.91$\pm$0.03 \\
    \, ACP-W\, & \,-\, & \,-\, & \,SSE\, & \,0.96\, & \,\textbf{0.14$\pm$0.11}\, & \,1.30$\pm$0.13\, & \,13.46$\pm$2.59\, & \,12.95$\pm$4.11\, & \,\textbf{0.96$\pm$0.02} \\
    \, ICP$_9$\, & \,3\, & \,8\, & \,SSE\, & \,0.97\, & \,\textbf{0.14$\pm$0.12}\, & \,\textbf{1.31$\pm$0.15}\, & \,12.58$\pm$0.76\, & \,11.33$\pm$1.40\, & \,0.90$\pm$0.04 \\
    \hline
\end{tabular}
\end{center}
\end{table}

In PSE scheme, we see the coverage rate of ICP is consistently higher than the baselines to provide better safety guarantees, while ICP still achieves state-of-the-art performance in navigation and social-awareness. Fig.~\ref{fig-cr-nt-mean-std} shows that ICP reaches a sweet spot of the lowest navigation time and the highest coverage rate in PSE scheme regardless of the crowd density of the scenes. We find that OffCP has a consistently lower coverage rate than online methods including ICP and ACP. This matches our claim that OffCP calibrates human motion uncertainty of based on samples from a shifted distribution which ignores the interaction between the robot and the humans, and would lead to inaccurate uncertainty quantification.

\begin{figure}[hbt!]
\includegraphics[width=\textwidth]{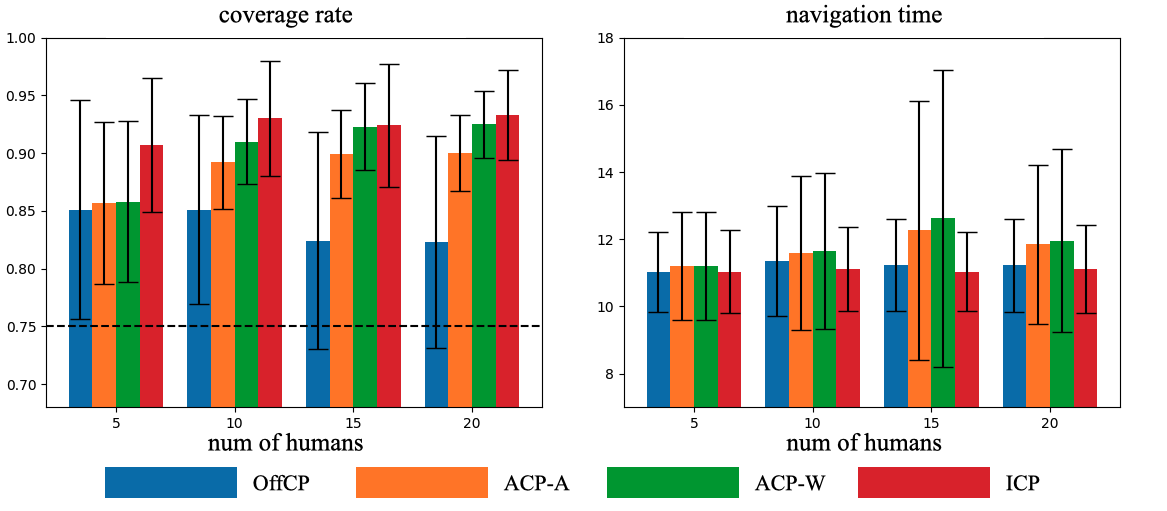}
\caption{Coverage rate (CR) and robot navigation time (NT) of algorithms with Pred-Step Execution scheme in crowd scenes of different number of humans. The error bars denote the standard deviation. The unit of robot navigation time is second. We use ICP$_1$ among all ICPs with PSE configurations for comparison.} \label{fig-cr-nt-mean-std}
\end{figure}

Fig.~\ref{fig-test-case} presents the comparison of performance between ICP$_1$ and ACP-W for each test case in both PSE and SSE schemes. We clearly see the effect of number of humans on the distribution of coverage rate over test cases in ACP-W in both left of Fig.~\ref{fig-cr-nt-mean-std} and the top row of Fig.~\ref{fig-test-case}, where there is a notable number of violations of coverage rate lower bound in 5-human test cases. This is due to the fact that ACP collects the calibration dataset on the fly, of which the size is insufficient and dependent on the number of humans. In contrast, the coverage rate of test cases run with ICP is both high and stable as the simulation provides abundant interaction-aware samples for uncertainty calibration even when the number of humans is low. We argue this is also the reason why ICP$_{2-6}$ whose calibration size spans from 2 to 64 have similar performance across all metrics. Running 2 simulation episodes turns out to be sufficient to calibrate human motion uncertainty when the robot needs to navigate through 10 humans.

\begin{figure}[hbt!]
\includegraphics[width=\textwidth]{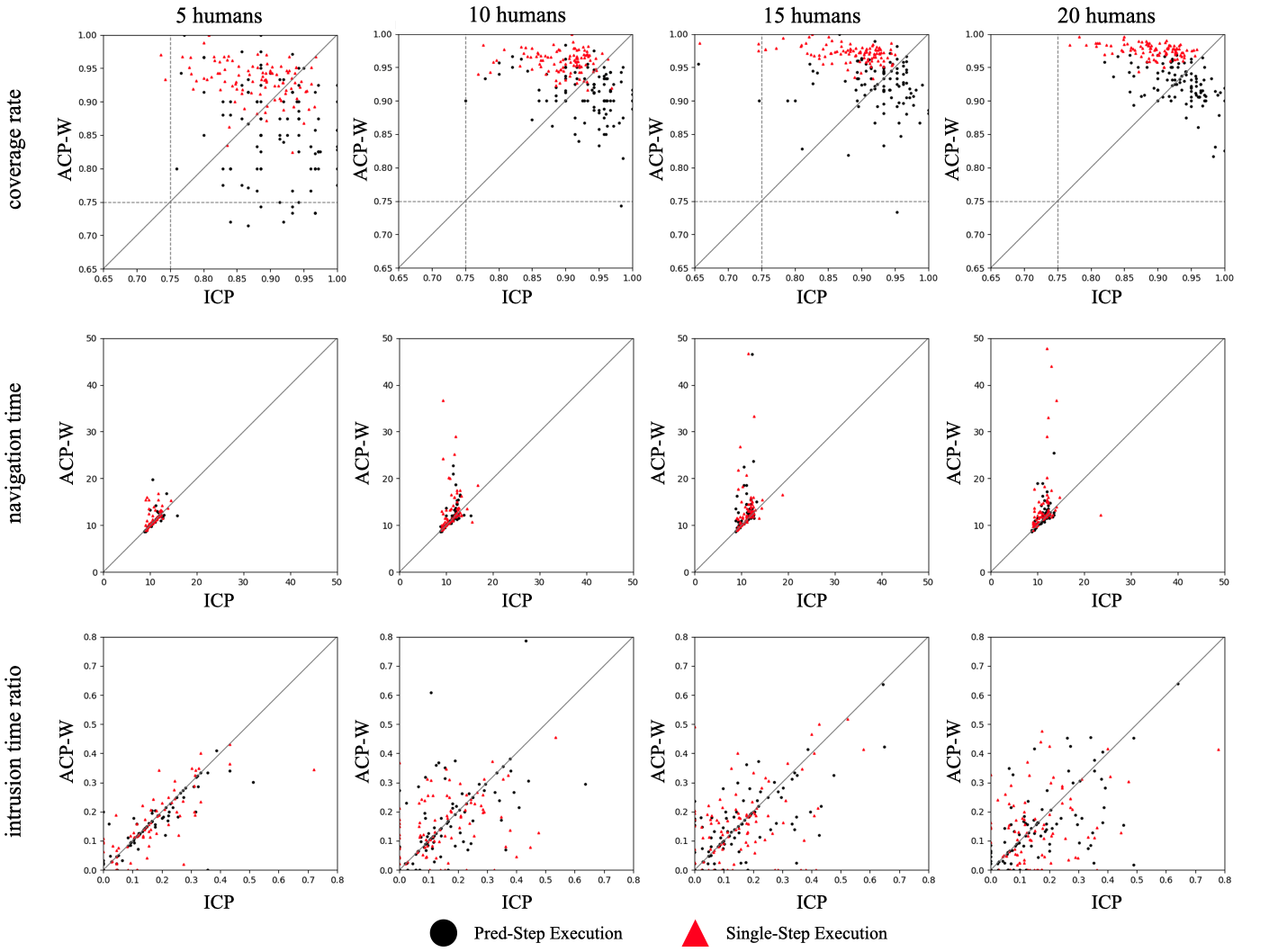}
\caption{Performance comparison between ICP and ACP-W for both Pred-Step Execution (PSE) Scheme and Single-Step Execution (SSE) Scheme. One black dot is for one test case in the Pred-Step Execution, where ICP and ACP-W share the same configurations on start and goal positions for the robot and the humans. One red triangle is for one test case in the Single-Step Execution. The X value of a black dot or a red triangle shows the performance of ICP, and the Y value shows the performance of ACP-W. Note we use ICP$_1$ for PSE comparison.} \label{fig-test-case}
\end{figure}

It is surprising that the top row of Fig.~\ref{fig-test-case} indicates the coverage rate of ACP-W is in the SSE scheme is better than in the PSE scheme, which is reverse to our expectation as SSE makes the lower bound of the coverage rate not hold anymore. Nevertheless, The coverage rate of ACP-W comes at the price of unstable navigation time performance in contrast to ICP, which is shown in the middle row of Fig.~\ref{fig-test-case}. Regarding the social metrics, the bottom row of Fig.~\ref{fig-test-case} shows that ACP-W and ICP tend to have more comparable intrusion time ratio per test case when the number of humans are lower (e.g., 5). We reason that that lower number of humans indicates simpler interaction patterns, which are less sensitive to different robot plans from ACP and ICP.

Fig.~\ref{fig-snapshot} demonstrates crowd navigation of OffCP, ACP-W, and ICP$_9$ in SSE scheme. We see that the conformal interval radius of ICP during the crowd-robot interaction ($t = 5$) is greater than before ($t = 2.5$) and after ($t= 7.5$), which illustrates that the human motion uncertainty is higher when the crowd-robot interaction is more involved. ACP-W exhibit similar trend by implicitly capturing the mutual influence with online calibration dataset collection. However, the higher coverage rate of ACP-W is at the price of excessive collision constraints caused by the large confidence interval radius, which leads to overly conservative and deviated robot motion. OffCP fails to capture the mutual influence and has fixed small confidence interval radius assuming no presence of the robot through the whole episode, and results in robot motion similar to treating the predictions as ground truth future positions, which exhibits less social awareness.

\begin{figure}[hbt!]
\includegraphics[width=\textwidth]{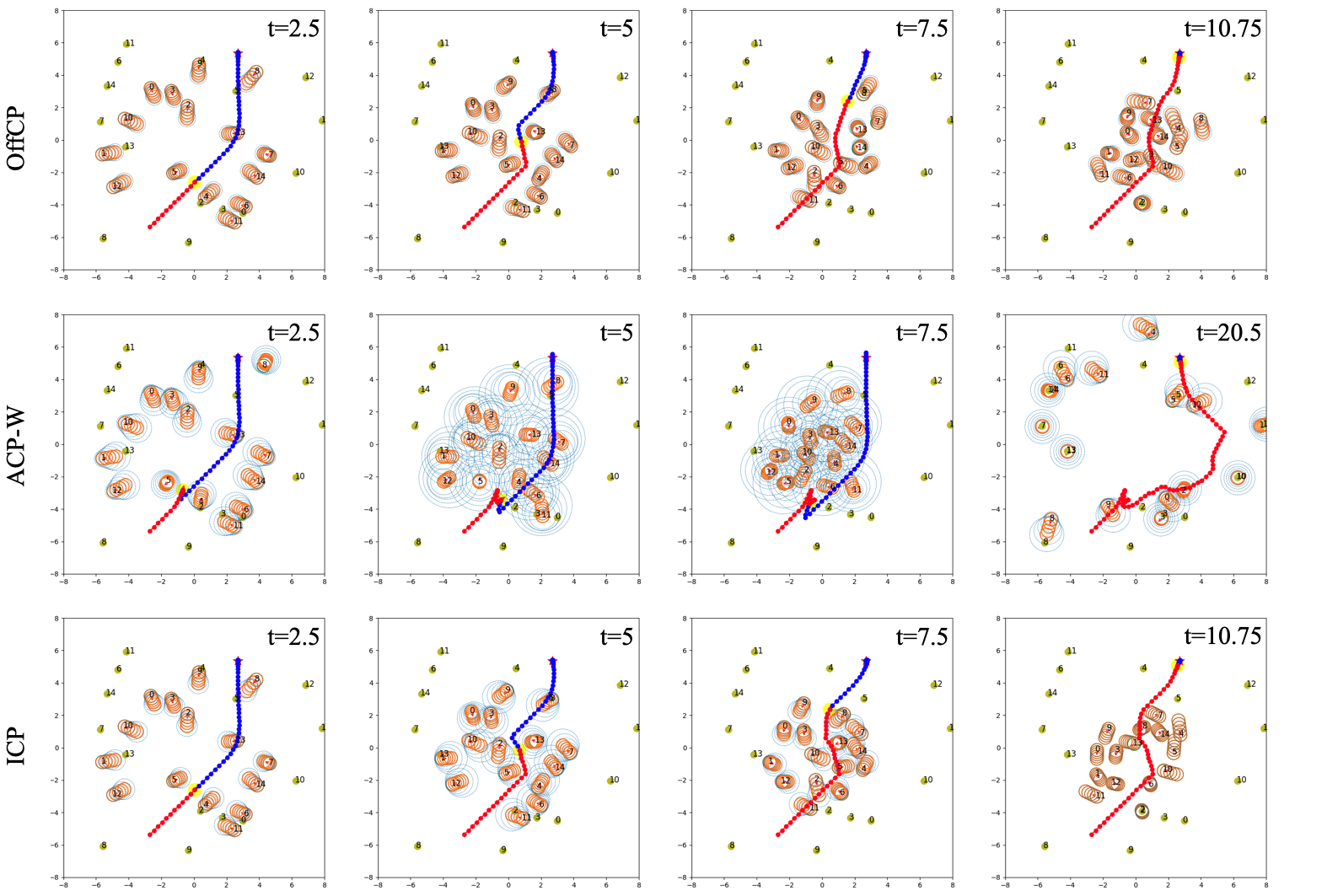}
\caption{Snapshots of one crowd navigation test case in SSE scheme. We use ICP$_9$ for ICP visualization. The last column shows the snapshots whe n the robot reaches the goal. The bright yellow disk denotes the robot. The star denotes the robot goal. The orange circles with indices denote the humans with the predicted positions. The bright blue circles denote human radius bloated by the confidence interval radius. The red dots denote the history of the robot positions. The blue dots denote the generated plan to be executed by the robot. The dark yellow dots with indices denote the corresponding human's goal.} \label{fig-snapshot}
\end{figure}

We investigate the practicality of ICP in real world applications by checking the runtime and GPU memory usage. We find that ICP with appropriate configurations can be readily applied in real-time in either PSE (1.25 sec) or SSE (0.25 sec) scheme. The GPU memory usage of the algorithm is also manageable for a standard commercial GPU (e.g., GeForce RTX 2080 with total memory  8192 MiB).

\section{Limitations}\label{sec-limitations}
As demonstrated in Fig.~\ref{fig-snapshot}, higher coverage rates may cause excessively constrained conditions, which leads to infeasible solutions. When this occurs, we use a cached plan which may not remain optimal. If the cached plan is used beyond $T_{pred}$ steps, the probabilistic safety guarantee no longer exists, and the robot and the humans are susceptible to collision. To address this challenge, we are interested in exploring the integration of the adaptive failure probability idea from ACP into ICP, where the adaptation is dependent on the feasibility of optimization problem in MPC.

The performance comparison between ICP$_1$, ICP$_7$, and ICP$_8$ in Table~\ref{table-quantitative} indicates that having 1 iteration of ICP can already capture interaction between robot and humans well. We argue that this is because ORCA is used both in the human simulator of ICP algorithm and for generating human motion in test scenarios. We expect that the sim-to-real gap between the human simulator and the real world human behavior pattern would require more iterations for better performance, which is left for future work.
\section{Conclusions}\label{sec-conclusions}
We present Interaction-aware Conformal Prediction (ICP) to explicitly address the mutual influence between robot and humans in crowd navigation problems. We achieve interaction awareness by proposing an iterative process of robot motion planning based on human motion uncertainty and conformal prediction of the human motion dependent on the robot motion plan. Our crowd navigation simulation experiments show ICP strikes a good balance of performance among navigation efficiency, social awareness, and uncertainty quantification compared to previous works. ICP generalizes well to navigation tasks across different crowd densities, and its fast runtime and manageable memory usage indicates potential for real-world applications.

In future work, we will address infeasible robot planning solutions with adaptive failure probability and conduct real-world crowd navigation experiments to evaluate the effectiveness of ICP. As ICP is a task-agnostic algorithm, we would like to explore its applications in manipulation settings, such as collaborative manufacturing.

\begin{credits}
\subsubsection{\ackname} This work was supported by the National Science Foundation under Grant No. 2143435 and by the National Science Foundation under Grant CCF 2236484.
\end{credits}

\bibliographystyle{splncs04}
\bibliography{bib}
\end{document}